\newtheorem{theorem}{Theorem}
\newtheorem{lemma}[theorem]{Lemma}
\newcommand{\E}[1]{\mathbb E \left[#1\right]}
\newcommand{\lr}[1]{\left(#1\right)}
\newcommand{\II}{\mathds 1}
\renewcommand{\cite}{\citep}
\title{Advice-Efficient Prediction with Expert Advice}
\author{Yevgeny Seldin$^{1,2,3}$ \and Peter Bartlett$^{1,2,4}$ \and Koby Crammer$^5$}
\date{\small $^1$Mathematical Sciences School, Queensland University of Technology, Brisbane, QLD, Australia\\
$^2$Department of Electical Engineering and Computer Sciences, UC Berkeley, Berkeley, CA, USA\\
$^3$Department of Computer Science, University College London, London, UK\\
$^4$Department of Statistics, UC Berkeley, Berkeley, CA, USA\\
$^5$Department of Electrical Engineering, The Technion, Haifa, Israel
}
\begin{document}
\maketitle

\begin{abstract}
Advice-efficient prediction with expert advice (in analogy to label-efficient prediction) is a variant of prediction with expert advice game, where on each round of the game we are allowed to ask for advice of a limited number $M$ out of $N$ experts. This setting is especially interesting when asking for advice of every expert on every round is expensive. We present an algorithm for advice-efficient prediction with expert advice that achieves $O\lr{\sqrt{\frac{N}{M}T\ln N}}$ regret on $T$ rounds of the game.
\end{abstract}

\section{Motivation}

We study the problem of prediction with expert advice in a setting, where we have a large set of experts $N$, but asking for advice of all experts on all rounds of the game is overly expensive. For example, the experts may be financial advisers designing investment portfolios for the stock market and getting the advice of each expert may be very expensive. Therefore, we would like to ask for advice of a smaller number $M \leq N$ of experts on each round (generally, $M \ll N$), but still be close to the best we could get if we would ask all experts for their advices. We call this setting advice-efficient prediction with expert advice in analogy to label-efficient prediction with expert advice \cite{CBL06}.

\section{Setting and Notations}

We work in prediction with expert advice setting \cite{CBL06}. We denote the action space by ${\cal X}$, the outcome space by ${\cal Y}$, and the loss function by $\ell: {\cal X} \times {\cal Y}\rightarrow [0,1]$ (for our analysis there is no need to assume that the loss is convex in the first parameter). The number of experts is denoted by $N$ and the experts are indexed by $h \in \{1,\dots,N\}$. On each round $i$ of the game each expert $h$ produces an advice $\psi_i^h \in {\cal X}$. On each round the player is allowed to ask for advice of a fixed number $M \leq N$ of experts. The player asks for advice and plays action $X_i \in {\cal X}$. The environment then reveals an outcome $y_i$ and the player suffers a loss $\ell(X_i, y_i)$ and the experts suffer losses $\ell(\psi_i^h, y_i)$. The goal of the algorithm is to minimize the regret defined as $\sum_{i=1}^t \ell(X_i, y_i) - \min_h \lr{\sum_{i=1}^t \ell(\psi_i^h, y_i)}$.

\section{Main Result}

We prove the following regret bound for the algorithm presented in Algorithm \ref{algo:SubExp} box.

\begin{algorithm}[t]
$\forall h$: $\hat L_0(h) = 0$.\\
\For{$i = 1,2,...$}{
Let
\[
q_i(h) = \frac{e^{-\eta_i \hat L_{i-1}(h)}}{\sum_{h'} e^{-\eta_i \hat L_{i-1}(h')}}.
\]
Sample one expert $H_i$ according to $q_i$. Get advice $\psi_i^{H_i}$.\\~\\
Play $X_i = \psi_i^{H_i}$.\\~\\
Observe nature outcome $y_i$ and suffer loss $L_i = \ell(X_i, y_i)$.\\~\\
Sample $M-1$ additional experts uniformly without replacement. Let $\II_i^h = 1$ if expert $h$ was sampled and $\II_i^h = 0$ otherwise. (For $H_i$ used in the definition of $X_i$ we have $\II_i^{H_i} = 1$.)\\~\\
Get advices $\psi_i^h$ for the experts sampled.\\~\\
\[
\forall h: L_i^h = \ell(\psi_i^h, y_i) \frac{1}{q_i(h) + (1 - q_i(h)) \frac{M-1}{N-1}} \II_i^h.
\]
\[
\forall h: \hat L_i(h) = \sum_{j=1}^i L_j^h. 
\]
}
\caption{Advice-efficient prediction with expert advice.}
\label{algo:SubExp}
\end{algorithm}

\begin{theorem}
\label{thm:SubExp}
The expected regret of Algorithm \ref{algo:SubExp} on $T$ rounds of the game satisfies:
\[
\E{\sum_{i=1}^T L_i} - \min_h \lr{\sum_{i=1}^T \ell(\psi_i^h, y_i)} \leq 2 \sqrt{\frac{N}{M} T \ln N}.
\]
\end{theorem}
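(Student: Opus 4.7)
The approach will mirror the standard analysis of the exponentially weighted forecaster with importance-weighted loss estimates (as in EXP3), modified to handle the enlarged sampling probability induced by the extra $M-1$ uniform draws. The key observation is that, conditioned on the history up to the start of round $i$, each expert $h$ is sampled with probability exactly $p_i(h) := q_i(h) + (1-q_i(h))\frac{M-1}{N-1}$, so the estimator $L_i^h$ is unbiased: $\SCE{}{L_i^h}{\text{hist}_{i-1}} = \ell(\psi_i^h,y_i)$, and $\SCE{}{L_i}{\text{hist}_{i-1}} = \sum_h q_i(h)\ell(\psi_i^h,y_i) = \SCE{}{\sum_h q_i(h) L_i^h}{\text{hist}_{i-1}}$.

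First I would carry out the potential-function argument with a fixed learning rate $\eta$ (treating the time-varying $\eta_i$ by either assuming known $T$ or invoking a standard doubling/time-varying schedule at the end). Define $W_i = \sum_h e^{-\eta \hat L_i(h)}$, so that $W_i/W_{i-1} = \sum_h q_i(h)\,e^{-\eta L_i^h}$. Since $L_i^h \geq 0$, I can apply the inequality $e^{-x} \leq 1 - x + x^2/2$ for $x \geq 0$, followed by $\ln(1+u)\leq u$, to get
\[
\ln\frac{W_i}{W_{i-1}} \leq -\eta \sum_h q_i(h) L_i^h + \frac{\eta^2}{2} \sum_h q_i(h) (L_i^h)^2.
\]
Summing over $i$ and using $\ln(W_T/W_0) \geq -\eta \hat L_T(h^*) - \ln N$ for any fixed $h^*$ yields the standard inequality
\[
\sum_{i=1}^T \sum_h q_i(h) L_i^h - \hat L_T(h^*) \leq \frac{\ln N}{\eta} + \frac{\eta}{2} \sum_{i=1}^T \sum_h q_i(h)(L_i^h)^2.
\]
Taking expectations and using unbiasedness turns the left-hand side into the regret $\E{\sum_i L_i} - \sum_i \ell(\psi_i^{h^*},y_i)$.

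The main obstacle, and the only place where the advice-efficient setup enters, is bounding the conditional second moment term. A direct computation gives $\SCE{}{(L_i^h)^2}{\text{hist}_{i-1}} = \ell(\psi_i^h,y_i)^2/p_i(h) \leq 1/p_i(h)$, so it remains to bound $\sum_h q_i(h)/p_i(h)$. Here I will exploit the fact that $p_i(h) = \frac{M-1}{N-1} + q_i(h)\frac{N-M}{N-1} \geq \frac{M-1}{N-1}$, which immediately yields
\[
\sum_h \frac{q_i(h)}{p_i(h)} \leq \frac{N-1}{M-1} \sum_h q_i(h) = \frac{N-1}{M-1}.
\]

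Plugging this bound back in gives $\text{Regret} \leq \frac{\ln N}{\eta} + \frac{\eta T}{2}\cdot\frac{N-1}{M-1}$. Optimizing over $\eta$ (choosing $\eta = \sqrt{2(M-1)\ln N/((N-1)T)}$) produces $\sqrt{2T(N-1)\ln N/(M-1)}$. For $M\geq 2$ one has $(N-1)/(M-1)\leq 2N/M$, which yields the claimed $2\sqrt{(N/M)T\ln N}$ bound; the boundary case $M=1$ reduces to EXP3 with $p_i(h)=q_i(h)$ and $\sum_h q_i(h)/p_i(h)=N$, giving $\sqrt{2NT\ln N}\leq 2\sqrt{NT\ln N}$. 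The only subtlety left is the time-varying $\eta_i$ used in the algorithm, which can be accommodated by a routine modification of the potential argument (or by simply using the fixed optimal $\eta$ when $T$ is known); I do not expect this to change the leading constant.
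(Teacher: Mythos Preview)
Your argument is correct and structurally the same as the paper's: both use the exponential-weights potential inequality (the paper packages it as Lemma~\ref{lem:Lsum}, you derive it inline), unbiasedness of $L_i^h$, and then a bound on $\sum_h q_i(h)/p_i(h)$ followed by optimizing the learning rate.

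The one substantive difference is in how the second-moment sum $\sum_h q_i(h)/p_i(h)$ is controlled. The paper proves the sharp bound $\sum_h q_i(h)/p_i(h)\le N/M$ via a dedicated Lagrangian calculation (Lemma~\ref{lem:sum}) showing the worst case is the uniform distribution; with this and the time-varying rate $\eta_i=\sqrt{M\ln N/(iN)}$ the constant $2$ comes out directly. You instead use the one-line lower bound $p_i(h)\ge (M-1)/(N-1)$ to get $(N-1)/(M-1)$, and then recover the same leading constant via $(N-1)/(M-1)\le 2N/M$ for $M\ge 2$, treating $M=1$ as EXP3. Your route is more elementary and avoids the optimization lemma entirely, at the price of a looser intermediate constant (and a separate case for $M=1$); the paper's route gives the clean $N/M$ bound uniformly in $M$ and connects directly to the anytime schedule stated in the algorithm.
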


The ``price'' that we pay for observing the advice of $M$ instead of all $N$ experts is multiplicative $\sqrt{\frac{N}{M}}$ term. The constant is identical to the constant in the ``simple'' analysis of exponentially weighted forecasters in \citet[Corollary 2.2]{CBL06} and slightly worse than the constant in the tighter analysis in \citet[Theorem 2.3]{CBL06} (we are loosing a $\sqrt{2}$ factor), but we can improve the constant using similar techniques. 

\section{Analysis}

The analysis is based on the following lemma, which follows from the analysis of EXP3 by \citet{Bub10}.

\begin{lemma}
\label{lem:Lsum}
For any $N$ sequences of random variables $L_1^h, L_2^h, \dots$ indexed by $h \in \{1,\dots,N\}$, such that $L_i^h \geq 0$, and any non-increasing sequence $\eta_1, \eta_2, \dots$, such that $\eta_i \geq 0$, for $q_i(h) = \frac{\exp \lr{-\eta_i \sum_{j=1}^{i-1} L_j^h}}{\sum_{h'} \exp \lr{-\eta_i \sum_{j=1}^{i-1} L_j^{h'}}}$ (assuming for $i=1$ the sum in the exponent is zero), for all $h^\star$ simultaneously we have:
\begin{equation}
\label{eq:Lsum}
\sum_{i=1}^T \sum_h q_i(h) L_i^h \leq \sum_{i=1}^T \frac{\eta_i}{2} \sum_h q_i(h) \lr{L_i^h}^2 + \frac{\ln N}{\eta_{_T}} + \sum_{i=1}^T L_i^{h^\star}.
\end{equation}
\end{lemma}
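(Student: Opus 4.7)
The plan is a standard potential-function argument, tailored to the non-increasing step-size schedule. I define the normalized log-partition potential
\[
\Phi_i(\eta) := -\frac{1}{\eta} \ln \lr{\frac{1}{N}\sum_h e^{-\eta \hat L_i(h)}}, \qquad \hat L_i(h) := \sum_{j=1}^i L_j^h,
\]
which satisfies two useful boundary properties: $\Phi_0(\eta) = 0$ for every $\eta > 0$ (since $\hat L_0 \equiv 0$), and, for any comparator $h^\star$, $\Phi_T(\eta_T) \leq \hat L_T(h^\star) + \frac{\ln N}{\eta_T}$ (keep only the $h = h^\star$ term in the log-sum-exp). The strategy is to lower-bound each per-step increment of $\Phi_i(\eta_i)$ by the desired instantaneous quantity $\sum_h q_i(h) L_i^h$ minus the quadratic term, and then upper-bound the telescoped sum of these increments by $\Phi_T(\eta_T)$ using the monotonicity of $\Phi_i(\cdot)$.

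For the per-round bound, observe that $q_i$ is precisely the Gibbs distribution at inverse-temperature $\eta_i$ induced by $\hat L_{i-1}$, so
\[
\Phi_i(\eta_i) - \Phi_{i-1}(\eta_i) = -\frac{1}{\eta_i} \ln \sum_h q_i(h)\, e^{-\eta_i L_i^h}.
\]
Since $\eta_i L_i^h \geq 0$, applying $e^{-x} \leq 1 - x + x^2/2$ termwise and then $\ln(1+y) \leq y$ yields
\[
\Phi_i(\eta_i) - \Phi_{i-1}(\eta_i) \;\geq\; \sum_h q_i(h) L_i^h \;-\; \frac{\eta_i}{2}\sum_h q_i(h)\lr{L_i^h}^2.
\]
Summing over $i$ and rearranging gives $\sum_i \sum_h q_i(h) L_i^h \leq \sum_i [\Phi_i(\eta_i) - \Phi_{i-1}(\eta_i)] + \sum_i \frac{\eta_i}{2}\sum_h q_i(h)(L_i^h)^2$, so it remains only to upper-bound the first sum on the right by $\Phi_T(\eta_T)$.

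Setting $\eta_0 := \eta_1$ and splitting,
\[
\sum_{i=1}^T [\Phi_i(\eta_i) - \Phi_{i-1}(\eta_i)] = \sum_{i=1}^T [\Phi_i(\eta_i) - \Phi_{i-1}(\eta_{i-1})] + \sum_{i=1}^T [\Phi_{i-1}(\eta_{i-1}) - \Phi_{i-1}(\eta_i)].
\]
The first sum telescopes to $\Phi_T(\eta_T) - \Phi_0(\eta_1) = \Phi_T(\eta_T)$. The second sum is termwise nonpositive as soon as $\eta \mapsto \Phi_{i-1}(\eta)$ is non-increasing on $(0,\infty)$, because $\eta_{i-1} \geq \eta_i$ by hypothesis. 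Combining with $\Phi_T(\eta_T) \leq \hat L_T(h^\star) + \ln N/\eta_T$ yields \eqref{eq:Lsum}.

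The only real obstacle is verifying this monotonicity, but it is a classical property of the Renyi-type soft-min of nonnegative values. Differentiating $\eta\Phi_i(\eta) = -\ln \bigl(\frac{1}{N}\sum_h e^{-\eta \hat L_i(h)}\bigr)$ in $\eta$ gives $\Phi_i(\eta) + \eta \Phi_i'(\eta) = \sum_h p_\eta(h) \hat L_i(h)$, where $p_\eta(h) \propto e^{-\eta \hat L_i(h)}$ is the Gibbs tilt. Hence $\Phi_i'(\eta) \leq 0$ is equivalent to $\sum_h p_\eta(h) \hat L_i(h) \leq \Phi_i(\eta)$, i.e.\ $\ln \bigl(\frac{1}{N}\sum_h e^{-\eta \hat L_i(h)}\bigr) \leq -\eta \sum_h p_\eta(h) \hat L_i(h)$, which is exactly the Donsker-Varadhan identity $\ln \bigl(\frac{1}{N}\sum_h e^{-\eta \hat L_i(h)}\bigr) = -\eta \sum_h p_\eta(h) \hat L_i(h) - D(p_\eta \| U_N)$, with $U_N$ uniform on $\{1,\ldots,N\}$, after dropping the nonnegative KL term.
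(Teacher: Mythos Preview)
Your proof is correct and complete: the potential $\Phi_i(\eta) = -\frac{1}{\eta}\ln\bigl(\frac{1}{N}\sum_h e^{-\eta \hat L_i(h)}\bigr)$, the per-round bound via $e^{-x}\leq 1-x+x^2/2$ and $\ln(1+y)\leq y$, the telescoping with the $\eta_{i-1}\to\eta_i$ correction term handled by the monotonicity of $\eta\mapsto\Phi_i(\eta)$, and the Donsker--Varadhan justification of that monotonicity all check out. The paper does not actually prove this lemma but merely cites \citet{Bub10}; the argument you give is precisely the standard potential-function analysis of EXP3 with time-varying step-sizes that underlies that reference, so your approach coincides with what the citation points to.
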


Now we are ready to prove Theorem \ref{thm:SubExp}.

\begin{proof}[Proof of Theorem \ref{thm:SubExp}]

We study $\sum_h q_i(h) L_i^h$ and $\sum_h q_i(h) \lr{L_i^h}^2$ for the case of our algorithm. We have:
\[
\E{L_i^h} = \ell(\psi_i^h, y_i).
\]
And we have:
\begin{equation}
\label{eq:EqL}
\E{\sum_h q_i(h) L_i^h} = \sum_h q_i(h) \E{L_i^h} = \sum_h q_i(h) \ell(\psi_i^h, y_i) = \E{L_i}.
\end{equation}
We also have:
\begin{align*}
\sum_h q_i(h) \lr{L_i^h}^2 &= \sum_h q_i(h) \lr{\ell(\psi_i^h, y_i) \frac{1}{q_i(h) + (1 - q_i(h)) \frac{M-1}{N-1}} \II_i^h}^2\\
&=\sum_h q_i(h) \ell(\psi_i^h, y_i)^2 \lr{\frac{1}{q_i(h) + (1 - q_i(h)) \frac{M-1}{N-1}}}^2 \II_i^h\\
&\leq \sum_h q_i(h) \lr{\frac{1}{q_i(h) + (1 - q_i(h)) \frac{M-1}{N-1}}}^2 \II_i^h.
\end{align*}
And from here:
\begin{align}
\E{\sum_h q_i(h) \lr{L_i^h}^2} &\leq \sum_h q_i(h) \lr{\frac{1}{q_i(h) + (1 - q_i(h)) \frac{M-1}{N-1}}}^2 \E{\II_i^h}\notag\\
&= \sum_h q_i(h) \frac{1}{q_i(h) + (1 - q_i(h)) \frac{M-1}{N-1}}\notag\\
&= \sum_h \frac{q_i(h) (N-1)}{q_i(h)(N - M) + M - 1}\notag\\
&\leq \frac{N}{M}.\label{eq:EqL2}
\end{align}
The proof of the last inequality is provided in Lemma \ref{lem:sum} the appendix.

By taking expectations of the two sides of \eqref{eq:Lsum} and substituting \eqref{eq:EqL} and \eqref{eq:EqL2} we obtain for all $h^\star$:
\[
\E{\sum_{i=1}^t L_i} \leq \frac{N}{M} \sum_{i=1}^t \frac{\eta_i}{2} + \frac{\ln N}{\eta_t} + \sum_{i=1}^t \ell(\psi_i^{h^\star}, y_i).
\]

Finally, taking $\eta_i = \sqrt{\frac{M \ln N}{i N}}$ completes the proof.
\end{proof}

\section{Easy Extensions}

The following extensions are easy to show:

\begin{enumerate}
	\item Since the variance of $L_i^h$-s is bounded by $(N-1)/(M-1)$ independently of time, it is easy to derive a high-probability result with similar guarantees.
	
	\item It is easy to show that the algorithm and analysis can be extended to adversarial multiarmed bandits, where we are allowed to reveal the loss of more than one action on each round (reward games can be translated to loss games via the transformation $\ell = 1 - r$, where $r \in [0,1]$ is the reward and $\ell \in [0,1]$ is the loss). Specifically, assume that in adversarial multiarmed bandit game with $K$ arms the player plays and suffers the loss of one action on each round, but then the player is allowed to observe the losses of $M - 1$ additional arms on the same round. Then, by identifying each arm with an expert that always predicts that arm, we can show that the regret of Algorithm \ref{algo:SubExp} is $O\lr{\sqrt{\frac{K}{M}T\ln K}}$. Interestingly, for $M > 1$ the variance of importance-weighted sampling is bounded by $(K-1)/(M-1)$ for all game rounds and it is possible to derive high-probability guarantees without additional smoothing in contrast to the EXP3.P algorithm.
\end{enumerate}

{\small
\bibliography{bibliography}
}

\appendix

\section{Lemma \ref{lem:sum}}

\begin{lemma}
\label{lem:sum}
For any probability distribution $q$ on $\{1,\dots,N\}$ and any $M \leq N$:
\begin{equation}
\label{eq:sum}
\sum_{h=1}^N \frac{q(h)(N-1)}{q(h)(N-M) + M - 1} \leq \frac{N}{M}.
\end{equation}
\end{lemma}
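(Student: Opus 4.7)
My plan is to recognize the summand as an evaluation of a single concave function and then apply Jensen's inequality to the uniform average of its values at the $N$ points $q(1),\dots,q(N)$.

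Define
\[
f(x) = \frac{x(N-1)}{x(N-M) + (M-1)}
\]
on $[0,1]$, so that the left-hand side of \eqref{eq:sum} equals $\sum_{h=1}^N f(q(h))$. First I would verify concavity of $f$ by direct differentiation: writing $a = N-1$, $b = N-M$, $c = M-1$, one computes $f'(x) = ac/(bx+c)^2$ and $f''(x) = -2abc/(bx+c)^3$, which is $\leq 0$ on $[0,1]$ whenever $M \geq 1$ and $M \leq N$ (with equality when $M=1$, in which case $f$ is affine on $(0,1]$, and the inequality reduces to the trivial bound $\sum_h \mathbf{1}\{q(h)>0\} \leq N$).

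Next I would apply Jensen's inequality to the concave function $f$ evaluated at the points $q(1),\dots,q(N)$ with uniform weights $1/N$:
\[
\frac{1}{N}\sum_{h=1}^N f(q(h)) \leq f\lr{\frac{1}{N}\sum_{h=1}^N q(h)} = f\lr{\frac{1}{N}},
\]
since $q$ is a probability distribution. A short computation evaluates the right-hand side:
\[
f\lr{\frac{1}{N}} = \frac{\frac{1}{N}(N-1)}{\frac{1}{N}(N-M) + (M-1)} = \frac{N-1}{(N-M) + N(M-1)} \cdot \frac{N}{N} = \frac{N-1}{M(N-1)} = \frac{1}{M},
\]
and multiplying by $N$ yields $\sum_{h=1}^N f(q(h)) \leq N/M$, which is precisely \eqref{eq:sum}.

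The only conceptually nontrivial step is noticing that the summand is a concave function of $q(h)$ so that Jensen is applicable; once that is in hand, the rest is routine algebra. The potential pitfall is the degenerate $M=1$ case where $c=0$ and $f$ is not strictly concave, but as noted above the inequality is then immediate, so the argument above handles all $1 \leq M \leq N$.
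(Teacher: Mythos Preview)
Your proof is correct. Both your argument and the paper's hinge on the same computation: the second derivative of $f(x) = \frac{x(N-1)}{x(N-M)+(M-1)}$ is $-2(N-1)(N-M)(M-1)/(x(N-M)+(M-1))^3 \leq 0$, so $f$ is concave on $[0,1]$, and both then evaluate at the uniform distribution $q(h)=1/N$ to obtain $N/M$.

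The packaging differs. The paper sets up a Lagrangian for maximizing $\sum_h f(q(h))$ subject to $\sum_h q(h)=1$, computes the first-order conditions to find that the unique stationary point is the uniform distribution, and then checks the Hessian (which is diagonal with the entries above) to confirm this is a maximum. You instead observe that once $f$ is concave, Jensen with uniform weights gives $\frac{1}{N}\sum_h f(q(h)) \leq f\bigl(\frac{1}{N}\sum_h q(h)\bigr)=f(1/N)$ in one line. Your route is shorter and sidesteps the need to argue that the Lagrangian stationary point is a global (rather than merely local or interior) maximizer; the paper's route makes more explicit that equality is attained at the uniform distribution. Conceptually they are the same idea---concavity forces the extremum to the barycenter---expressed through two standard lenses.
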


\begin{proof}
First, we show that the maximum of \eqref{eq:sum} is attained by the uniform distribution $q(h) = 1 / N$. The Lagrangian corresponding to minimization of \eqref{eq:sum} subject to $\sum_h q(h) = 1$ is:
\[
{\cal L}(q) = \sum_{h=1}^N \frac{q(h)(N-1)}{q(h)(N-M) + M - 1} + \lambda \lr{1 - \sum_h q(h)}.
\]
The first derivative of the Langrangian is:
\[
\frac{\partial {\cal L}}{\partial q(h)} = \frac{(N-1)(q(h)(N-M) + M - 1) - q(h)(N-M)(N-1)}{\lr{q(h)(N-M) + M - 1}^2} - \lambda = \frac{(N-1)(M-1)}{\lr{q(h)(N-M) + M - 1}^2} - \lambda.
\]
The important point is that the derivative depends only on single $h$ and, therefore, when we equate the derivative to zero the extremum is achieved when all $q(h)$ are equal. And, as a result, they are equal to $1/N$.

The second derivative is:
\[
\frac{\partial^2 {\cal L}}{\partial q(h)^2} = - \frac{2(N-M)(N-1)(M-1)}{\lr{q(h)(N-M) + M - 1}^3} \leq 0
\]
(note that for $M > 1$ and $N > M$ the inequality is strict; and for $M=1$ or $N=M$ it is easy to check that \eqref{eq:sum} holds) and the mixed partial derivatives $\frac{\partial^2 {\cal L}}{\partial q(h) \partial q(h')} = 0$. Therefore, $q(h) = 1/N$ is the maximum point (for $1 < M < N$). Substituting $q(h) = 1/N$ into \eqref{eq:sum} we get:
\[
\sum_{h=1}^N \frac{\frac{1}{N}(N-1)}{\frac{1}{N}(N-M) + M - 1} = \frac{N(N-1)}{N - M + N(M-1)} = \frac{N(N-1)}{M(N-1)} = \frac{N}{M}.
\]
\end{proof}

\end{document}